\documentclass[conference]{IEEEtran}
\IEEEoverridecommandlockouts

\usepackage{cite}
\usepackage{amsmath,amssymb,amsfonts}
\usepackage{algorithmic}
\usepackage{dsfont}
\usepackage{graphicx}
\usepackage{textcomp}
\usepackage{xcolor}
\usepackage[utf8]{inputenc}
\usepackage[small]{caption}
\usepackage{float}
\usepackage{url}
\usepackage{graphicx}
\usepackage{subcaption}
\usepackage{amssymb}
\usepackage{amsmath}
\usepackage{amsthm}
\usepackage{afterpage}
\usepackage{booktabs}
\usepackage[switch]{lineno}
\usepackage{color}
\usepackage{array,multirow}
\newcommand{\bulle}{\textcolor[rgb]{0.00,0.00,1.00}{\bullet}}
\usepackage[titlenotnumbered,ruled,vlined,noend,linesnumbered]{algorithm2e}
\usepackage{times}
\usepackage[
  colorlinks=true,
  citecolor=blue,
  linkcolor=blue,
  urlcolor=blue
]{hyperref}

\newtheorem{theorem}{Theorem}

\newtheorem{proposition}[theorem]{Proposition}

\newtheorem{definition}{Definition}
\newcommand{\X}{\boldsymbol{X}}
\newcommand{\x}{\mathbf{x}}

\newcommand{\R}{\mathbb{R}}

\newcommand{\cov}{\text{Cov}}

\def\BibTeX{{\rm B\kern-.05em{\sc i\kern-.025em b}\kern-.08em
    T\kern-.1667em\lower.7ex\hbox{E}\kern-.125emX}}

\begin{document}

\title{Optimizing Multidimensional Scaling in Gini Metric Spaces
}

\author{\IEEEauthorblockN{1\textsuperscript{st} Cassandra Mussard}
\IEEEauthorblockA{
\textit{CESBIO, Univ. Toulouse}\\
Toulouse, France \\
009-0008-8679-0088}
\and
\IEEEauthorblockN{2\textsuperscript{nd} Stéphane Mussard}
\IEEEauthorblockA{\textit{CHROME, Univ. Nîmes} \\
\textit{UM6P, AIRESS}\\
Nîmes, France \\
0000-0003-2331-630X}
}
\maketitle

\begin{abstract}
 The Gini Multidimensional Scaling (Gini MDS) framework extends the Euclidean multidimensional scaling. We introduce a Gini pseudo-distance based on values and their ranks that depends on a fine-tunable hyperparameter. This pseudo-distance allows flexible exploration of latent configurations, enabling embeddings that best match observed dissimilarities. The Gini MDS is shown to be robust to noise and outliers, making it well-suited for real-world applications. We provide experiments on 16 UCI datasets with outliers and on MNIST images with noise to show that the Gini MDS outperforms the Euclidean MDS on noisy data. Finally, a tensor-based implementation in \texttt{PyTorch} provides GPU acceleration and efficient computation compared to the standard MDS of the \texttt{sklearn} library.  
\end{abstract}

\begin{IEEEkeywords}
Dimensionality reduction ; Gini distances ; Multidimensional scaling ; Robustness. 
\end{IEEEkeywords}

\section{Introduction}
Multidimensional scaling (MDS) is a multivariate analysis tool of dimensionality reduction aimed at generating points and visualizing the structure of complex data through a distance matrix computed in a low-dimensional geometric space. Given a matrix of distances (or dissimilarities) between points, MDS fits a Euclidean space in which the distances between points approximate the true distances as closely as possible. This method is generally used when information about the distances between instances is available only, whereas the features and their correlations remain unknown. This is particularly relevant in many fields, where datasets are often small, allowing to infer latent structures that are not directly observable. By transforming dissimilarities or distances between instances, MDS provides a new metric space (in general in two or three dimensions) in which correlations and classifications may be performed. 

The literature brings out a wide quantity of applications, ranging from operational research to computer science. For instance, \cite{Safizadeh1996}
demonstrate the use of non-metric multidimensional scaling in the design of facility layouts. They show that different distance metrics can be employed over many instances without restrictions. They also show that MDS may offer a low computation cost and an intuitive visualization of the generated data into two or three dimensions. \cite{GiladiSpectorRaveh1984} study the computational performances of the MDS under hardware attributes such as the importance of IO rate, cache size, multiprocessing, and CPU speed. 
Another branch of the literature is more related to the study of correlations and statistical inference. \cite{Hooley1998} use MDS to model product positions, precisely to analyze perceptions of the similarity between cigarette brands. The MDS provides a perceptual (latent) space that reflects their relative positioning   (brands differentiation), \textit{i.e.} the pattern of the market structure under investigation. \cite{Abe1998} investigates the theoretical foundation of maximum likelihood estimation in non-metric MDS, focusing on error modeling and parameter estimation. A new error structure is proposed, \textit{i.e.} a multiplicative disturbance associated to distances, to improve the reliability of estimation with asymptotic confidence regions.

Recent MDS papers are related to the concept of metric distances used in algorithms under the prism of machine learning. For example, \cite{DuLi2024} employ the K-nearest neighbors, relying on the classical Euclidean distance, in order to fill missing values in the matrix of dissimilarity (see also, \cite{BLUMBERG2025104234}).  In the same vein, \cite{GachkoobanAlizadehShakeri2024} propose to estimate the coordinates of the instances under the assumption that each instance depends only on its own position, so that its distances with other points are issued from its neighbors. On the other hand, \cite{DelicadoPachonGarcia2024} suggest algorithms to handle several thousand of points. Their idea is to use different batches of distances, so that the classical MDS can be applied iteratively on these batches. Again, for large-scale data, \cite{CanzarEtAl2024} propose a neural network–based method that scales metric MDS to millions of instances, with a particular focus on single-cell RNA sequencing. This approach outperforms conventional MDS in speed over big datasets.

The Gini MDS framework extends classical MDS with different contributions. 

(i) This pseudo-distance relies on couples of values and ranks that can be designed into a proper distance metric with an appropriate normalization. In this respect, it captures correlations that depend on ranks and values in the data. 

(ii) The Gini pseudo-distance includes a tunable hyperparameter, which enables the exploration of a wide family of generated points (embeddings) and facilitates the selection of the best latent space that fits the observed dissimilarities. As in \cite{CanzarEtAl2024,RamSabach2024,Trosset2024,DzemydaSabaliauskas2024} a stress minimization approach is performed. The difference between the Gini pseudo-distances of the original instances and the Gini pseudo-distances of the embeddings (in the latent space) is minimized to solve for the hyperparameter. 

(iii) The Gini pseudo-distance is robust to noise due to its structure relying on both ranks and values. 

(iv) A tensor-based implementation of the Gini MDS is proposed using \texttt{PyTorch} for GPU usage and efficient computation, and is subsequently applied to 16 UCI datasets, MNIST data, and heavy-tailed distributions to illustrate its behavior with and without contamination. 

The outline of the paper is the following. Section \ref{literature} presents the state of the art about Gini distances. Section \ref{MDS} extends the Euclidean (classical) MDS to optimized Gini MDS based on stress metrics. Section \ref{xp} deals with classification tasks on 16 UCI datasets and MNIST images with both normal conditions and contaminated data. Section \ref{conslusion} closes the paper.

\section{State of the art: Gini metrics}\label{literature}

The Gini regression  \cite{Olkin1991} and the generalized Gini regression  \cite{yitzhaki2013gini} rely on the minimization of the Gini covariance used as a distance function. Consider a model $Y = \alpha + \beta X + \varepsilon$ with $(X,Y)$ following some bivariate distributions, with $\hat{\alpha} = \bar{y} - \hat{\beta}\bar{x}$ the intercept of the model, the residuals $e = y - \hat{y}$, and $\overline{F}_e$ the survival distribution function of the residuals $\hat{\varepsilon}= e$. The estimator of the parametric Gini regression is:
\begin{equation}\notag
\hat{\beta}^{P} = \arg\min_{\nu}\cov(e,-\overline{F}_{e}^{\nu-1}(e)) 
\end{equation}
When $\nu=2$, the minimization of $\cov(e,-\overline{F}_{e}^{\nu-1}(e))$ corresponds to the minimization of the distance $\mathbb{E}{|\varepsilon_1-\varepsilon_2|}$ (with $\varepsilon_1,\varepsilon_2$ two copies of $\varepsilon$), that is the minimization of the Gini index of the residuals. This process is equivalent to the robust median regression \cite{Olkin1991}. When $\nu - 1 > 1$, more weight is applied to the lower part of the distribution of $e$, whereas if $\nu - 1 < 1$, more weight is assigned to the upper part of the distribution of $e$, which allows one to control the sensitivity of the method to outliers.

\cite{Sang2024} propose to employ a Gini distance correlation for classification purpose. If $Y$ is a categorical variable with $l$ labels $Y = (L_1,\cdots,L_l)$, with  $F_{j}$ the cdf of $X_j$ (group $j$) and $F_{j|k}$ the conditional cdf of $X_j$ given $Y=L_k$, the Gini distance correlation (univariate) can be expressed in the following way:
$$
\text{gCov}(X_j,Y) = \sum_{k=1}^K \mathbb{P}(Y = L_k) \int_{\mathbb R} (F_{j|k}(x) - F_{j}(x)) dx 
$$
The gCov distance can be extended to a multivariate setting for classification purposes. \cite{Sang2024} apply this distance to the linear discriminant analysis and binary logistic regression in a ultrahigh dimensional setting, showing the better performance of gCov compared to traditional $R^2$ metric issued from the variance analysis especially in case of heavy-tailed error distributions. The gCov distance relies on two properties, the sure independence screening property and the ranking consistency property, however it is not a distance metric. 

\cite{Mussard2025} propose a Gini \textit{prametric} for supervised and non-supervised classification tasks, \textit{i.e.} for K-nearest neighbors and K-means algorithms respectively. A prametric is not a distance since it may fail to satisfy symmetry and the triangle inequality. The Gini prametric is symmetric, non-negative, and it is null if $x=y$ or if the cdfs of $x$ and $y$ are equal. 

Compared to the previous Gini metrics, our proposed generalized Gini pseudo-distance takes benefit from the triangle inequality property. 
Although this property can be viewed as non-essential in many spaces, it remains one of the building-block property of Euclidean metric spaces. In particular, projecting points into a reduced Euclidean space can exploit the triangle inequality to preserve their relative proximity, as shown in the robust MDS proposed by \cite{Blouvshtein2018} where the triangle is checked for all sets of 3 instances. Accordingly, generating MDS embeddings by means of the generalized Gini pseudo-distance may help to preserve the local structure of the data. Furthermore, as the Gini distance is a robust statistic, it allows to control for the noise and outliers in the data. Consequently the Gini MDS falls into the category of robust MDS, such as Huber MDS \cite{Forero2012}, outlier detection MDS \cite{Li2023}, or robust MDS inspired from robust PCA \cite{DengWang2025}.

\section{Generalized Gini pseudo-distances}\label{distance}

In this section, we begin with the definition of the Gini pseudo-distance. Then, we present the generalized Gini pseudo-distance that accounts for different weights to be put on the distributions of points. Finally, a comparison is made with other Gini distances.  

\subsection{Gini pseudo-distance}

\textit{Notations.} Let $\mathbb E$ be a nonempty set such that $\mathbb E \subseteq \R$ and $\mathbb E^d $ its $d$-dimensional representation with $d\in \mathbb N \setminus \{0\}$, $\mathbb N$ be the set of integers. Let $0_d$ be the $d$-dimensional vector of zeros and $\mathds{1}_d$ the $d$-dimensional vector of ones. The cardinal of set $A$ is denoted $\#\{A\}$, and the set of all positive integers including $d\in \mathbb N$ is $[d]:=\{1,\cdots,d\}$. Let $\X \equiv [x_{ij}]$ being a matrix of size $n\times d$ with $\x_i$ and $\x_j$ representing respectively the $i$th row and the $j$th column of $\X$, $\bar{\x}_i$ being the arithmetic mean of $\x_i$.  

The Minkowski $p$-norm is, for all $x \in \mathbb E^d$ with $p\geq 1$,
$$
\lVert x \rVert_p \equiv d_p(x,0_d) := \sqrt[p]{ \sum\limits_{j=1}^d |x_{j}|^p }
$$
For $p=2$, the well-known Euclidean norm is given by, as a distance from $0_d$,
$$
\lVert x \rVert_2 = \lVert x - 0_d \rVert_2 = d_2(x,0_d) = \sqrt{\sum\limits_{j=1}^d (x_j - 0)^2}
$$
Translating this into a Gini framework implies that information on gaps and on rank gaps should be taken into account. 

\begin{definition}  
Let $u\in \mathbb E^d$ and let $R:\mathbb E \to \mathbb N\setminus \{0\}$ be the rank function that yields the position of each element $u_i$ of $u$ such that, $R(u_i) = 1 + \sum_{j=1}^{d} \mathbb{I}(u_j < u_i)$, with $\mathbb{I}(u_j < u_i)=1$ if $u_j < u_i$, $0$ otherwise. The Gini norm is defined as, for all $x\in \mathbb E^d$,
\begin{equation}\label{Gini-norm}
\lVert x \rVert_G := \sum\limits_{j=1}^d (x_j-0)(R(x_{j}) - R(0))
\end{equation}
\end{definition}
Ranks of ties can be estimated as midpoints, this strategy is used to avoid biases in the case of the Gini coefficient estimation, see \cite{yitzhaki2013gini}:
\begin{equation}\label{rank-ties}
R(x_j) = \frac{1 + \sum_{i=1}^{d} \mathbb{I}(x_i \leq x_j)}{\#\{x_i | x_i = x_j\}}
\end{equation}
This implies $R(0)=\frac{d+1}{2}$, and  Eq.\eqref{Gini-norm} becomes:
$$
\lVert x \rVert_G = \sum\limits_{j=1}^d x_j\Big(R(x_{j}) - \frac{d+1}{2}\Big)
$$
In this respect, we can show that $\lVert x \rVert_G$ is a seminorm.

\begin{proposition}\label{proposition-distance} \emph{\textbf{(Gini seminorm)}}
\newline Let $x,y \in \mathbb E^d$ and $\lambda \in \mathbb R$:
\newline $(\imath)$ $\|\lambda x \|_G = \lvert \lambda \rvert \| x \|_G$ \textcolor{blue}{\emph{(Absolute homogeneity)}}
\newline $(\imath\imath)$ $ \|x+y\|_G \leq  \|x\|_G + \|y \|_G$ \textcolor{blue}{\emph{(Triangle inequality)}}
\end{proposition}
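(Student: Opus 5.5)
The plan is to rewrite the Gini seminorm as a maximum of linear functionals and deduce both properties from that representation. Let $c_k := k - \frac{d+1}{2}$ for $k\in[d]$ and let $S_d$ be the set of permutations of $[d]$. The key identity I aim to establish is
\begin{equation}\notag
\lVert x \rVert_G \;=\; \max_{\sigma \in S_d} \sum_{j=1}^d x_j\, c_{\sigma(j)} .
\end{equation}
This is the rearrangement inequality. The sum $\sum_j x_j c_{\sigma(j)}$ is maximized when $\sigma$ orders the coordinates of $x$ in the same way as the increasing sequence $(c_k)$, that is, when $\sigma(j)$ is a rank of $x_j$.

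\emph{Ties.} This is the step I expect to need the most care. If there are no ties, $R(x_j)=\sigma^*(j)$ for the sorting permutation $\sigma^*$, and the identity follows at once. If there are ties, choose any $\sigma^*$ that sorts $x$ and breaks ties arbitrarily. On a block of equal coordinates $x_j = t$, the midrank assigned by \eqref{rank-ties} is the average of the consecutive positions $\sigma^*(j)$ occupied by that block. Since $x$ is constant on the block, $\sum_{j\in\text{block}} t\,(R(x_j)-\frac{d+1}{2}) = \sum_{j\in\text{block}} t\, c_{\sigma^*(j)}$. Summing over blocks gives $\lVert x\rVert_G = \sum_j x_j c_{\sigma^*(j)}$, which is the maximum by rearrangement. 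I would also check that \eqref{rank-ties} really yields the average position of each tied block, and interpret it that way if needed.

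\emph{Absolute homogeneity.} For $\lambda=0$ both sides are $0$. For $\lambda>0$ the ranks are unchanged, so $R(\lambda x_j)=R(x_j)$ and the factor $\lambda$ pulls out. For $\lambda<0$ the order is reversed, so $R(\lambda x_j) = d+1-R(x_j)$; this holds for midranks as well, since reversal maps block averages to block averages. Hence $R(\lambda x_j)-\frac{d+1}{2} = -(R(x_j)-\frac{d+1}{2})$, and
\begin{equation}\notag
\lVert \lambda x\rVert_G = \sum_j \lambda x_j \cdot\Big(-\big(R(x_j)-\tfrac{d+1}{2}\big)\Big) = |\lambda|\,\lVert x\rVert_G .
\end{equation}
Alternatively, the max-representation gives positive homogeneity directly. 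The case $\lambda<0$ then uses the fact that the reversal $k\mapsto d+1-k$ maps $c_k$ to $-c_k$, which gives $\lVert -x\rVert_G=\lVert x\rVert_G$.

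\emph{Triangle inequality.} Pick a maximizing $\sigma$ for $x+y$. Then
\begin{equation}\notag
\lVert x+y\rVert_G = \sum_j x_j c_{\sigma(j)} + \sum_j y_j c_{\sigma(j)} \le \max_{\tau}\sum_j x_j c_{\tau(j)} + \max_{\tau}\sum_j y_j c_{\tau(j)} = \lVert x\rVert_G+\lVert y\rVert_G .
\end{equation}

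As a by-product, nonnegativity also follows: averaging $\sum_j x_j c_{\sigma(j)}$ over all $\sigma$ gives $0$, because $\sum_k c_k=0$, so the maximum is $\ge 0$. The main obstacle is therefore only the tie-handling in the max-representation; once that identity is in place, the rest is immediate.
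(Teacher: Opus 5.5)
Your proof is correct. The homogeneity part matches the paper's argument ($R(\lambda x)=R(x)$ for $\lambda>0$ and $R(-x)=d+1-R(x)$), with the small improvement that you treat $\lambda=0$ separately.

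For the triangle inequality, you and the paper both rest on the rearrangement (Hardy--Littlewood--P\'olya) inequality, but organize it differently. With $z=x+y$, the paper expands
$\lVert z\rVert_G=\lVert x\rVert_G+\lVert y\rVert_G+\sum_j (R(z_j)-R(x_j))x_j+\sum_j (R(z_j)-R(y_j))y_j$.
It then argues that each correction term is nonpositive by treating $R(z)$ as a permutation of $R(x)$. You instead prove once that $\lVert\cdot\rVert_G$ is the maximum of the linear forms $x\mapsto\sum_j x_j c_{\sigma(j)}$. After that, subadditivity, evenness and nonnegativity each take one line.

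The paper's route exhibits the exact slack in the inequality; yours is more robust with ties. When $z$ has tied coordinates, its midrank vector is not a permutation of $R(x)$: for example, $x=(1,2)$ and $y=(2,1)$ give $R(z)=(3/2,3/2)$. The paper's step therefore tacitly needs the extra fact that a midrank vector is an average of rank permutations. You sidestep this by writing $\lVert z\rVert_G=\sum_j z_j c_{\sigma(j)}$ along a genuine tie-broken sorting permutation $\sigma$ of $z$, and splitting that sum. This is legitimate exactly because $z$ is constant on its tied blocks.

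Your caution about Eq.~\eqref{rank-ties} is also warranted. As printed, it gives $\mathrm{rank}+1$ when there are no ties and $(d+1)/d$ on $0_d$. The midrank reading $\#\{i:x_i<x_j\}+\frac{1+\#\{i:x_i=x_j\}}{2}$ is the one consistent with $R(0)=\frac{d+1}{2}$ and with the identity $R(-x)=d+1-R(x)$ that both proofs use.
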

\begin{proof}
See the Appendix.
\end{proof}

Contrary to the the Minkowski $p$-norm for which the norm is null if and only if $x = 0_d$, two cases may arise in the case of the Gini seminorm. Let the rank vector $R(x):=(R(x_1),\ldots,R(x_d))$:
$$
\lVert x \rVert_G =
\left \{
\begin{array}{ll}
   0,  & \text{if } x = 0_d \\
   0,  & \text{if } R(x) = c\mathds{1}_d, \ \forall c \in \R \setminus \{0\} \end{array}
\right.
$$
Consequently, the Gini seminorm cannot be a norm since the property of point separation \textit{i.e.} $\lVert x \rVert_G = 0 \ \Longrightarrow \ x = 0_d$ is not respected. On the basis of the Gini seminorm, a Gini pseudo-distance can be defined accordingly. 

\begin{definition}  \textbf{\emph{(Gini pseudo-distance)}}
\newline Given the rank function defined in Eq.\eqref{rank-ties}, the Gini pseudo-distance is defined as, for all $x,y\in \mathbb E^d$,
\begin{equation}\notag
\mathcal{D}_G(x,y) := \lVert x-y \rVert_G = \sum\limits_{j=1}^d (x_j-y_j)(R(x_{j}-y_j) - R(0))
\end{equation}
\end{definition}

Consequently, the Gini pseudo-distance function is null if $x = y$, a standard property of distance functions. This condition implies that $R(x) = R(y)$, however in cases of egalitarian distributions (but not identical $x\neq y$), $x = \lambda y$ such that $\lambda\neq 0$ and $x = c\mathds{1}_d$ with $c\neq 0$, then $R(x - y) = R(0_d)$, and therefore $\mathcal{D}_G(x,y)=0$. This means that the Gini pseudo-distance is sensitive both to value and rank differences. 

Let us take a simple example. Let $u = (10, 1200)$ and $v = (10, 12)$, therefore $R(u-v) = (1, 2)$. Then,
\begin{align}
    \mathcal{D}_G(u,v) &= (10-10)(1-3/2) + (1200-12)(2-3/2) \notag \\
    &= 0 + 1188 \times 0.5 \notag \\
    &= 594 \notag
\end{align}
The Gini distance function allows important gaps such as $1200-12$ to be attenuated by the rank difference valued to be $2-3/2 = 0.5$. In cases of measurement errors, for instance if the value $1200$ were wrongly assigned to $u_2$ instead of $12$, the weight $0.5$ would contribute to shrink the gap. The Euclidean distance remains sensitive to gaps only, even more if it relies on gaps including outlying observations. This idea of using ranks to obtain robust statistics was first proposed by \cite{Spearman1904} to measure rank correlations (robust to outliers) and later by \cite{Durbin1954} who proposed to use ranks as instruments in regressions.


\begin{proposition}\label{proposition-distance-2} \emph{\textbf{(Gini pseudo-distance properties)}}
\newline Let $x,y,z \in \mathbb E^d$, $ c \in \R \setminus \{0\}$ and $ \lambda \in \R \setminus \{0,1\}$:
\newline $(\imath)$ $\mathcal{D}_G(x,y) = 0$ if $x=y$ \textcolor{blue}{\emph{(Null)}}
\newline $(\imath\imath)$ $\mathcal{D}_G(x,y) = 0$ if $x=c\mathds{1}_d = \lambda y$ \textcolor{blue}{\emph{(Egalitarian distributions)}}
\newline $(\imath\imath\imath)$ $\mathcal{D}_G(x,y) = \mathcal{D}_G(y,x) $ \textcolor{blue}{\emph{(Symmetry)}}
\newline $(\imath v)$ 
$\mathcal{D}_G(x,y) \leq \mathcal{D}_G(x,z)+\mathcal{D}_G(z,y)$ \textcolor{blue}{\emph{(Triangle inequality)}}
\newline $(v)$ $\mathcal{D}_G(x, y) \geq 0$ \textcolor{blue}{\emph{(Non-negativity)}}
\newline $(v\imath)$ The Gini pseudo-metric space $(\mathbb E^d, \mathcal{D}_G)$ can be transformed into a metric space.
\end{proposition}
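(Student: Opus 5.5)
The plan is to derive every item from Proposition~\ref{proposition-distance} (the Gini seminorm properties) together with the identity $\mathcal{D}_G(x,y)=\lVert x-y\rVert_G$. Non-negativity I will prove directly.

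\emph{Null property $(\imath)$.} If $x=y$, then $x-y=0_d$ and every summand contains the factor $x_j-y_j=0$, so $\mathcal{D}_G(x,y)=0$.

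\emph{Egalitarian distributions $(\imath\imath)$.} Here $x=c\mathds{1}_d$ and $y=\lambda^{-1}c\mathds{1}_d$, so $x-y=c(1-\lambda^{-1})\mathds{1}_d$ is a constant vector. All its entries are tied, so by Eq.~\eqref{rank-ties} every rank equals $\frac{d+1}{2}=R(0)$. Each factor $R(x_j-y_j)-R(0)$ therefore vanishes, and the sum is zero.

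\emph{Symmetry $(\imath\imath\imath)$.} Apply absolute homogeneity $(\imath)$ of Proposition~\ref{proposition-distance} with $\lambda=-1$: $\mathcal{D}_G(y,x)=\lVert -(x-y)\rVert_G=\lVert x-y\rVert_G$. The one point to confirm is that reversing a vector's order reverses its midrank vector, $R(-u_j)=d+1-R(u_j)$, which the homogeneity proof presumably already uses.

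\emph{Triangle inequality $(\imath v)$.} Write $x-y=(x-z)+(z-y)$ and invoke $(\imath\imath)$ of Proposition~\ref{proposition-distance}.

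\emph{Non-negativity $(v)$.} Rather than the usual trick $0=\lVert u-u\rVert\le 2\lVert u\rVert$, which needs $\lVert 0_d\rVert_G=0$ (true, since the factor $u_j$ is zero), I prove it directly, because that is more transparent. Set $u=x-y$. Since $\sum_j\big(R(u_j)-\frac{d+1}{2}\big)=0$, we have
\begin{equation*}
\lVert u\rVert_G=\sum_j (u_j-\bar u)\Big(R(u_j)-\tfrac{d+1}{2}\Big),
\end{equation*}
which is $d$ times the covariance between $u$ and its (mid)ranks. Ranks are a nondecreasing function of the values, so this covariance is nonnegative. One way to see this is the rearrangement inequality; another is the pairwise identity
\begin{equation*}
\sum_{i<j}(u_i-u_j)\big(R(u_i)-R(u_j)\big)\geq 0,
\end{equation*}
where each term is nonnegative because $u_i>u_j$ implies $R(u_i)>R(u_j)$, and ties give zero.

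\emph{Metric quotient $(v\imath)$.} Use the standard Kolmogorov quotient. Define $x\sim y$ iff $\mathcal{D}_G(x,y)=0$. This relation is reflexive by $(\imath)$ and symmetric by $(\imath\imath\imath)$. It is transitive by $(\imath v)$ and $(v)$: $0\le\mathcal{D}_G(x,y)\le \mathcal{D}_G(x,z)+\mathcal{D}_G(z,y)=0$. Next check that $\mathcal{D}_G$ is well defined on classes: if $x\sim x'$, the triangle inequality gives both $\mathcal{D}_G(x,y)\le \mathcal{D}_G(x',y)$ and the reverse, so the two are equal. On $\mathbb E^d/\!\sim$ the induced function is then nonnegative, symmetric and satisfies the triangle inequality, and it separates points by construction, so it is a metric.

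The main obstacle is the seminorm itself, in particular subadditivity under a rank function that changes with the argument; this is taken from Proposition~\ref{proposition-distance}. Within the present statement, the delicate step is the non-negativity argument with midranks for ties. There I rely on the monotone-pairing identity rather than on any linearity of $R$.
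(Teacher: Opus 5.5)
Your proposal is correct. For $(\imath)$--$(\imath v)$ and $(v\imath)$ it matches the paper:
\begin{itemize}
\item symmetry rests on the same identity $R(-u_j)=d+1-R(u_j)$, which you package as absolute homogeneity with $\lambda=-1$;
\item the triangle inequality is subadditivity of $\lVert\cdot\rVert_G$, and your splitting $x-y=(x-z)+(z-y)$ even avoids the appeal to symmetry that the paper needs;
\item $(v\imath)$ is the same quotient, and you supply the transitivity and well-definedness checks that the paper omits.
\end{itemize}

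The real difference is non-negativity. The paper derives it from $0=\mathcal{D}_G(x,x)\le 2\,\mathcal{D}_G(x,z)$, so it depends on the Hardy--Littlewood--P\'olya subadditivity argument. You instead use the midrank facts (rank sum $d(d+1)/2$, strict increase across distinct values) to get
\[
d\,\lVert u\rVert_G=\sum_{i<j}(u_i-u_j)\big(R(u_i)-R(u_j)\big).
\]
State this identity with its constant rather than only alluding to ``the pairwise identity''.

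Your route is self-contained and bypasses the delicate subadditivity step, and it gives more. Each term is strictly positive when $u_i\neq u_j$, so $\mathcal{D}_G(x,y)=0$ if and only if $x-y\in\R\mathds{1}_d$. This identifies the classes of $\sim$ in $(v\imath)$ exactly. They are strictly larger than the Null and Egalitarian cases named in the paper's proof, e.g.\ $x=(1,2)$, $y=(0,1)$. It also justifies the paper's claim that the centered set $\mathbb G^d$ is a metric space, since centered vectors differing by $a\mathds{1}_d$ force $a=0$. The paper's argument is shorter but yields no such characterization.
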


\begin{proof} 
See the Appendix.\end{proof}

Property ($\imath \imath$) indicates that the Null property is not always issued from the condition $x=y$. Indeed, if $R(x_j-y_j)=R(0)$ for all $j\in [d]$ the Gini pseudo-distance is null. In other terms this condition rewrites $x=c\mathds{1}_d = \lambda y$ with $\lambda \neq 0,1$, \textit{i.e.} the distributions $x,y$ are egalitarian (but not identical). It is noteworthy that property ($v \imath$) is not very demanding and is in line with ($\imath \imath$). Indeed, a Gini metric space can be obtained by centering the data. Let the vector $\tilde x$ be centered such that $\tilde x = x - \bar{x}$. Therefore, we can define a set $\mathbb G^{d}$ such that:
$$
\mathbb G^{d} := \{ \tilde x \in \mathbb E^{d} : \tilde x = x - \bar{x} \}
$$
The space $(\mathbb G^{d}, \mathcal{D}_G)$ defines a Gini metric space that relies on centered data. Because pre-processing is common practice before running algorithms, the metric space $\mathbb G^{d}$ can be employed to deal with centered data. 

Although the properties defined in Proposition \ref{proposition-distance-2} are appealing, a more robust version of the Gini pseudo-distance is proposed in Section \ref{generalized}. 

\subsection{Generalized Gini pseudo-distance}\label{generalized}

\cite{Schechtman1987,schechtman2003family,yitzhaki2013gini} introduce a generalized Gini Mean Difference as a robust statistic compared to the usual covariance that is sensitive to outliers:
\begin{equation}\label{GMD_0}
GMD_\nu(X,Y) = -{2\nu}\ \cov(X,\overline{F}_Y(Y)^{\nu-1}) ,  \ \nu > 1,
\end{equation}
with $\overline{F}_Y$ the survival distribution function of $Y$ ($\overline{F}_Y=1-F_Y$). The Gini covariance operator $\cov(X,\overline{F}_Y(Y)^{\nu-1})$ is a robust version of the standard covariance operator \cite{yitzhaki2013gini}. Indeed, if $\nu=2$ more weight is put on the median of $Y$; if $\nu \in (1,2)$ more weight is put on the upper part of $Y$ (for instance if some outliers appear in the lower part of $Y$), and finally if $\nu >2$ more weight is put on the lower part of the distribution of $Y$ (which is a good strategy if outliers occur at the upper part of the distribution of $Y$). 

A natural way to build a generalized Gini pseudo-distance is to take benefit from the hyperparameter $\nu$ in order to focus on some particular parts of the distribution of $X-Y$. 
To put more weight on the tails of the distribution of $X-Y$, the hyperparameter $\nu$ can be employed as follows (for $\nu > 1$):
\begin{equation}\notag
\mathcal D_{G,\nu}(x,y) := -d\sum\limits_{j=1}^d (x_j - y_j)\Big(\overline{F}^{\nu-1}(x_{j} - y_{j}) - \overline{F}^{\nu-1}(0)\Big)
\end{equation}
The metric $\mathcal D_{G,\nu}(x,y)$ is not a pseudo-distance since it is not symmetric. Applying a symmetrization of $\mathcal D_{G,\nu}(x,y)$ yields a generalized Gini pseudo-distance:
\begin{equation}\notag
\overline{\mathcal D}_{G,\nu}(x,y) := \frac{1}{2}\mathcal D_{G,\nu}(x,y) + \frac{1}{2}\mathcal D_{G,\nu}(y,x), \ \nu > 1
\end{equation}

\begin{proposition}\label{proposition-distance-generalized}
Let $\overline{F}$ be symmetric and $\nu > 1$, then $\overline{\mathcal D}_{G,\nu}(x,y)$ is a pseudo-distance that satisfies properties $(\imath)-(v\imath)$ of Proposition
\ref{proposition-distance-2}. 
\end{proposition}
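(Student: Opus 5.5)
The plan is to rewrite the symmetrized quantity as a weighted sum of the gaps $u_j := x_j - y_j$, and then follow the same route as the proof of Proposition \ref{proposition-distance-2}. Set $u = x-y$ and let $\overline{F}$ be the empirical survival function of the entries of $u$. Since $\overline{F}$ is symmetric, the survival function of $-u = y-x$ evaluated at $-u_j$ equals the survival function of $u$ evaluated at $u_j$, once both are read through the symmetric weighting. Put $w(t) := \overline{F}^{\nu-1}(t) - \overline{F}^{\nu-1}(0)$. Then
\begin{equation}\notag
\overline{\mathcal D}_{G,\nu}(x,y) = -\frac{d}{2}\sum_{j=1}^d u_j\big(w(u_j) - w(-u_j)\big).
\end{equation}

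\textbf{Symmetry and null properties.} Symmetry $(\imath\imath\imath)$ is immediate, because the expression is invariant under $u \mapsto -u$. The null property $(\imath)$ holds because every summand contains the factor $u_j = 0$. For the egalitarian case $(\imath\imath)$: if $x = c\mathds{1}_d = \lambda y$, then $u$ is a constant vector. All entries are then tied, so midpoint ranks give $\overline{F}(u_j) = \overline{F}(0)$, mirroring the rank argument of Proposition \ref{proposition-distance-2}. Hence every weight vanishes.

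\textbf{Non-negativity $(v)$.} Since $\overline{F}$ is nonincreasing and $\nu - 1 > 0$, the map $t \mapsto \overline{F}^{\nu-1}(t)$ is nonincreasing. Therefore $w(u_j) - w(-u_j) = \overline{F}^{\nu-1}(u_j) - \overline{F}^{\nu-1}(-u_j)$ has the sign opposite to $u_j$. It follows that each term $-u_j\big(w(u_j) - w(-u_j)\big)$ is $\geq 0$.

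\textbf{Triangle inequality $(\imath v)$ — the main obstacle.} Because the weights depend on $u$ itself, sublinearity is not automatic. I would imitate the proof of Proposition \ref{proposition-distance}$(\imath\imath)$ via a rearrangement argument. Write
\begin{equation}\notag
\overline{\mathcal D}_{G,\nu}(x,y) = \frac{d}{2}\sum_j u_j\, a_j(u),
\end{equation}
where $a_j(u) := \overline{F}^{\nu-1}(-u_j) - \overline{F}^{\nu-1}(u_j)$ is a nondecreasing function of the rank of $u_j$. By the rearrangement inequality, the weight vector similarly ordered with $u$ maximizes $\sum_j u_j a_{\sigma(j)}$ over permutations $\sigma$ of a fixed weight multiset. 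This gives the variational form
\begin{equation}\notag
\overline{\mathcal D}_{G,\nu}(x,y) = \max_\sigma \frac{d}{2}\sum_j u_j\, a_{\sigma(j)}.
\end{equation}
The key point is that, with midpoint ranks, the weight multiset depends only on $d$ and $\nu$, except for ties, which are handled by averaging. A maximum of linear functionals is sublinear. Applying this with $x - y = (x - z) + (z - y)$ then yields the triangle inequality. The delicate part is justifying that the tie-averaged weights stay within the convex hull of permutations of the untied weights, so that the max-representation survives ties. I would check this by a majorization argument.

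\textbf{Metric space $(v\imath)$.} Finally, $(v\imath)$ follows exactly as before: restrict to the centered set $\mathbb G^{d}$, or quotient by the zero set of the pseudo-distance. The triangle inequality shows this zero set is an equivalence relation, so the quotient is a metric space.
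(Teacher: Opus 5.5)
Your rewriting $\overline{\mathcal D}_{G,\nu}(x,y)=-\frac d2\sum_j u_j\big(w(u_j)-w(-u_j)\big)$, which uses a single survival function, is not justified, and both your non-negativity and triangle-inequality steps depend on it. In $\mathcal D_{G,\nu}(y,x)$ the survival function is that of the entries of $-u$. With $\overline F_v(t)=\frac1d\#\{i:v_i>t\}$, the identity that always holds is $\overline F_{-u}(-u_j)=\frac1d\#\{i:u_i<u_j\}$. (Your sentence, read literally as $\overline F_{-u}(-u_j)=\overline F_u(u_j)$, would even give $\overline{\mathcal D}_{G,\nu}\equiv 0$.)

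Replacing $\overline F_{-u}$ by $\overline F_u$ is legitimate only when the entries of $u$ are symmetric about $0$. That cannot be a standing assumption: it fails for the nonzero constant $u$ of the egalitarian case, and it is not preserved under $u\mapsto u+v$, which is exactly what the triangle inequality needs. Your termwise non-negativity argument rests on this substitution and is false in general. For $u=(1,2,3)$ the weight on $u_1$ is $\overline F_{-u}^{\nu-1}(-1)-\overline F_u^{\nu-1}(1)=0-(2/3)^{\nu-1}<0$, so that summand is negative, even though the total, $3(2/3)^{\nu-1}$, is positive.

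Reading $\overline F$ as a fixed function does not help either. It would validate your formula and the sign argument, but the weights would no longer form a fixed multiset, and the triangle inequality genuinely fails: for a smooth $\overline F$ with positive density at $0$, each summand is of order $u_j^2$ near $0$, so doubling a small $u$ roughly quadruples the distance. Your non-negativity step and your triangle-inequality step therefore use incompatible readings of $\overline F$.

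The repair uses only what you already have. With the correct identity and no ties, the symmetrized weight on $u_j$ is $\phi(R(u_j))$ for every $u$, where $\phi(k)=\big(\frac{k-1}{d}\big)^{\nu-1}-\big(\frac{d-k}{d}\big)^{\nu-1}$. The $\overline F^{\nu-1}(0)$ terms cancel provided they are the same constant in both halves, as with $R(0)=\frac{d+1}{2}$. The function $\phi$ is increasing and satisfies $\phi(d+1-k)=-\phi(k)$, with no symmetry hypothesis on the data. Your max-representation then holds for all $u$ and gives sublinearity and evenness. Non-negativity follows in either of two ways:
\begin{itemize}
\item the maximum dominates the average over all permutations, which equals $\frac1d\big(\sum_k\phi(k)\big)\sum_j u_j=0$;
\item as in the paper, from the triangle inequality together with $\overline{\mathcal D}_{G,\nu}(x,x)=0$ and symmetry.
\end{itemize}
So repaired, your support-function argument is a cleaner version of the paper's proof. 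The paper applies the same Hardy--Littlewood--P\'olya rearrangement to the unsymmetrized $\mathcal D_{G,\nu}(\cdot,0)$, averages it with its reflection, and never uses the symmetry of $\overline F$.

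On ties, membership of the tied weights in the convex hull of permutations only gives $\overline{\mathcal D}_{G,\nu}\le\max_\sigma$, which is the wrong direction. What you need is that each tied block carries exactly its untied total weight. This holds if the weights themselves are averaged over the block, since $u$ is constant there and equality with the maximum is immediate. It does not hold in general if midpoint ranks are fed into the nonlinear map $r\mapsto\overline F^{\nu-1}$, and in that case the triangle inequality fails near tied points.
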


\begin{proof} 
See the Appendix.
\end{proof}

\section{Gini multidimensional scaling}\label{MDS}

We first expose the classical (Euclidean) MDS, then we propose the optimized Gini MDS algorithm, issued from the fine-tuned hyperparameter $\nu$ of the generalized Gini pseudo-distance.  

\subsection{Classical MDS}

Given a symmetric distance (dissimilarity) matrix \( \boldsymbol{D} = [d_{ij}] \in \mathbb{R}^{n \times n} \), the aim of MDS is to find a set of points \( x_1, \dots, x_n \in \mathbb{R}^p \) such that the pairwise Euclidean distances in the latent space (embeddings) approximate the original distances $
d_{ij} := \|x_i - x_j\|_2$ between all pairs of points $(i,j)$. Let $ \boldsymbol{D}^{2} \in \mathbb{R}^{n \times n} $ be the matrix of squared distances $d_{ij}^2$. Let $\boldsymbol{H}$ be the centering matrix $\boldsymbol{H} = I - \frac{1}{n} \mathds{1}_n \mathds{1}_n^\top$, then the Gram matrix $\boldsymbol{B} \in \mathbb{R}^{n \times n}$, which contains inner products between the centered data points, is obtained by double-centering:
\[
\boldsymbol{B} = -\frac{1}{2} \boldsymbol{H} \boldsymbol{D}^{2} \boldsymbol{H}
\]
The eigendecomposition of $\boldsymbol{B}$ yields $\boldsymbol{B} = V \Lambda V^\top$. Let $\Lambda_p \in \mathbb{R}^{p \times p}$ contains the top $p$ eigenvalues of $\boldsymbol{B}$, and $V_p \in \mathbb{R}^{n \times p}$ the corresponding eigenvectors. Thereby, the matrix $\boldsymbol{\tilde X} \in \mathbb{R}^{n \times p}$ of the embedded points in $\mathbb{R}^p$ is given by:
\[
\boldsymbol{\tilde X} = V_p \Lambda_p^{1/2}
\]
This yields a latent space equivalent to that issued from the principal component analysis, except that the space is induced by a matrix of distances rather than features that are not directly available.

\subsection{Optimized Gini MDS}

The Gini MDS consists of fitting the latent space by assuming that the distance matrix $\boldsymbol{D}$, constructed from pairwise generalized Gini distances, is induced by a Gini (pseudo-)metric space. In order to fit the best latent space, the hyperparameter is obtained by a simple minimization. The most common statistic is the Kruskal stress \cite{Kruskal1964,deLeeuwMair2009}, which can be used for any given distance $d_{ij}$:
    \[
    \text{KS} := \sqrt{ \frac{ \sum_{i<j} \left( \|\tilde x_i - \tilde x_j\|_2 - d_{ij} \right)^2 }{ \sum_{i<j} d_{ij}^2 } }
    \]
In Algorithm \ref{algo-GiniMDS} below, the optimized Gini MDS is displayed with batches (folds) in cases where the matrix of distances is issued from a large sample. For each value of the hyperparameter $\nu$, the Kruskal stress is performed in order to find the latent space that best fits the data, that is, the latent space for which the gap between distances in the latent space and initial distances is minimized.

\begin{algorithm}
\caption{
\textbf{Optimized Gini MDS}
}\label{algo-GiniMDS}
\KwIn{Training data $\X$}
mean\_stress = [ \ ] \\
\For{$\nu \in [1.1;5]$}{
    stress\_score = [ \ ]\; 
    \For{\emph{fold} in \emph{Kfolds}}{
        Compute Gini pseudo-distances $\overline{\mathcal D}_{G,\nu}(\X)$[fold] $\equiv [\overline{\mathcal D}_{G,\nu}(x_i,x_j)]$ \; 
        Run MDS on $\overline{\mathcal D}_{G,\nu}(\X)$[fold] \; Deduce latent space $\tilde{\X}$[fold]\; 
        Compute Euclidean distances on $\tilde{\X}$[fold]: $\|\tilde{x}_i - \tilde{x}_j\|_2$\; 
        Evaluate Kruskal’s Stress:
        \[
            \text{KS}(\nu) := \sqrt{ \frac{\sum_{i<j} \left( \|\tilde{x}_i - \tilde{x}_j\|_2 - \overline{\mathcal D}_{G,\nu}(x_i,x_j) \right)^2}{\sum_{i<j} \overline{\mathcal D}_{G,\nu}(x_i,x_j)^2} }
        \]
        stress\_score[fold] $\leftarrow$ KS$(\nu)$\;
    }
    mean\_stress[$\nu$] $\leftarrow$ mean(stress\_score) }
\Return $\nu^* = \arg\min$ mean\_stress, $\tilde \X$
\end{algorithm}

The optimal solution $\tilde{\X}$ obtained for $\nu^*$ can be compared with standard MDS, as shown in Figure \ref{fig:Gini-Euclidean} on the cars dataset (with two outliers: Ferrari Enzo at the top and Smart on the right), highlighting a better preservation of pairwise distances with outliers (we will come back to this point in Section \ref{sec:simul}).

\begin{figure}[h!]
    \centering
    \begin{subfigure}{0.23\textwidth}
        \centering
        \includegraphics[width=\linewidth]{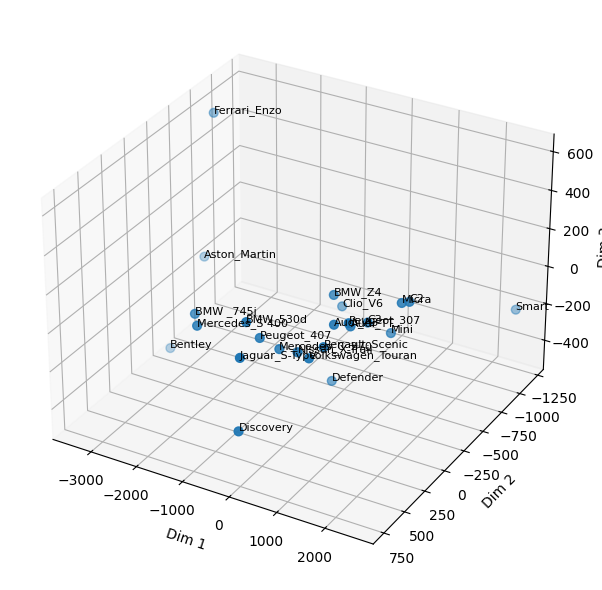}
        \caption{Euclidean MDS}
        \label{fig:euclid}
    \end{subfigure}
    \hfill
    \begin{subfigure}{0.23\textwidth}
        \centering
       \includegraphics[width=\linewidth]{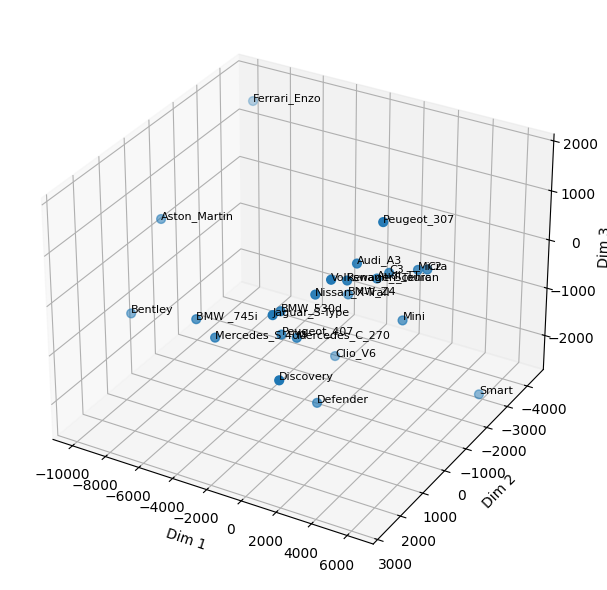}
        \caption{Gini MDS $\nu^*=9.4$}
        \label{fig:gini}
    \end{subfigure}
    \caption{Comparison of Euclidean and Gini MDS embeddings}
    \label{fig:Gini-Euclidean}
\end{figure}

However since the classical MDS is not optimized, we compare the Gini MDS to three optimized Euclidean MDS. The first one is a robust MDS derived from \cite{huber1964robust}. The errors, $e_{ij} := \| x_i - x_j \|_2 - \| \tilde x_i - \tilde x_j \|_2$ are penalized with the following loss function,

\begin{equation*}
\begin{split}
\mathcal{L}_{\text{Huber}} &=
\sum_{i < j} \rho_\delta \bigl( e_{ij} \bigr) \\
&\quad \text{where } \rho_\delta(e_{ij}) =
\begin{cases}
\frac{1}{2} e_{ij}^2, & \text{if } |e_{ij}| \le \delta, \\
\delta (|e_{ij}| - \tfrac{1}{2}\delta), & \text{otherwise.}
\end{cases}
\end{split}
\end{equation*}
Iterating over many values of $\delta$, the Huber loss penalizes small errors quadratically and large errors linearly, therefore  it is more robust to outliers than the Euclidean MDS. Second, the Sammon loss is used \cite{sammon1969mapping},
\[
\mathcal L_{\text{Sammon}} = 
\frac{1}{\sum_{i<j} \| x_i - x_j \|_2}
\sum_{i<j} \frac{e_{ij}^2}{\| x_i - x_j \|_2}
\]
It is close to Kruskal stress, but it imposes more weight to preserving small distances, consequently this preserves the local structure in $\tilde{\mathbf{X}}$. Third, SMACOF loss \cite{deleeuw1977applications},
\[
\mathcal L_{\text{SMACOF}}(\tilde \X) = 
\sum_{i < j} w_{ij} e_{ij}^2
\]
where $w_{ij}$ are nonnegative weights. These weights are often set to $w_{ij}=1$ so that the least-squares loss is minimized between
pairwise distances in $\X$ and in $\tilde{\X}$.

\section{Experiments}\label{xp}

Three experiments assess Gini MDS for supervised classification: (1) 1D MDS on 16 outlier-contaminated UCI datasets; (2) noisy MNIST; (3) comparison with three nonlinear MDS on heavy-tailed distributions (GPU-enabled \texttt{PyTorch} code available at \href{https://github.com/bangtan66708/MDS_Gini_pseudo_metric/tree/main}{GitHub}).

\subsection{UCI datasets}

As in \cite{Mussard2025}, 16 UCI datasets are selected due to their differences in the number of variables (features) and outliers (see Table \ref{datasets}).\footnote{The isolation forest test has been employed to compute the number of outliers (with 1\% contamination).} The Gini MDS is compared with the four Euclidean ones, without noise in the data, and with 5\% contamination to test for the robustness of the MDS algorithms.

\begin{table}[H]
\centering
\scriptsize
\caption{Description of the UCI datasets: \#E = number of instances, \#F = number of features, \#C = number of classes}
\begin{center}
\begin{tabular}{c|c|c |c| c| c|c} 
 \hline
 Name & \#E & \#F & \#C & Data type & \# Outliers & Biblio   \\ [0.5ex] 
 \hline\hline
 Australian  & 690 & 42 & 2 & Real &7 & \citenum{Quinlan1987}   \\
 \hline
 Banknote & 1372 & 4 & 2 & Real &14 & \citenum{Banknote}  \\
 \hline
 Breast-Cancer & 569 & 30 & 2 &Real, numeric & 6 &\citenum{Wolberg1993} \\
 \hline
 Ionosphere & 351 & 34 & 2 & Real &4 & \citenum{Ionosphere} \\
 \hline
 Iris & 150 & 4 & 3 & Real, numeric & 2 & \citenum{Iris} \\ 
 \hline
 German  & 1000 & 20 & 2 & Integer, Categorical & 10 & \citenum{German} \\
 \hline
 Glass  & 214 & 9 & 6 & Real & 3 & \citenum{german1987glass} \\
 \hline
Heart  & 303 & 14 & 2 & Integer &4 & \citenum{Janosi1989heart}  \\
 \hline
 QSAR  & 1055 & 41 & 2 & Real &8 & \citenum{mansouri2013qsar} \\
\hline
 Sonar  & 208 & 60 & 2 & Real, Categorical & 3 &\citenum{sejnowski1988sonar} \\
\hline
 Vehicle  & 946 & 18 & 4 & Integer & 9 & \citenum{vehicle} \\
 \hline
 Wine  & 178 &13& 3 & Real, numeric &2 & \citenum{wine} \\
 \hline
  Balance  & 625 &4& 3 & Integer & 7 & \citenum{balance} \\
 \hline
   Haberman  & 306 &3& 2 & Integer & 4 &\citenum{haberman} \\
 \hline  Wholesale  & 440 &7& 2 & Integer & 5 &\citenum{wholesale}  \\
  \hline  Indian Liver  & 346 &6& 2 & Real, Integer & 6 & \citenum{Ramana2022} \\
 \hline
\end{tabular}
\end{center}
\label{datasets}
\end{table}

\subsubsection{Metrics of similarity}

To evaluate and compare the MDS methods, similarity metrics are employed to assess whether the embeddings $\tilde{\X}$ resemble the original data $\X$. Three different metrics are used. First, the trustworthiness metric is used with $k = 5$ nearest neighbors to assess the local structure preservation of the MDS embedding. It penalizes points that appear among the $k$ nearest neighbors of point $i$ in the embedded space $\tilde{\X}$ but not in the original space $\X$, based on their rank $R(\X[i,j])$ in the neighborhood ordering of $i$ in $\X$:
\[
T(k) = 1 - \frac{2}{n k (2n - 3k - 1)}
\sum_{i=1}^{n}
\sum_{j \in N_i} (R(\X[i,j]) - k)
\]
When $T(5)=1$, the MDS yields a perfect preservation of local neighborhoods (among 5 neighbors) in the embedded space $\tilde \X$, while a negative score represents a bad preservation of the distances between the 5 neighbor points in $\tilde \X$. Second, the nearest neighbors statistic yields the proportion of neighbors in the embedded space $\tilde \X$ that share the same label. Then, it provides how the MDS locally gathers similar points:\footnote{Contrary to trustworthiness in which false neighbors are penalized, the $k$ nearest neighbors metric ($NN(k)$) gives a simple proportion of true neighbors among $k$ nearest points (the $k$ nearest points in set $N_i$ excluding $i$ are identified with the Euclidean distance in $\tilde \X$).} 
\[
NN(k)
= \frac{1}{n k} 
\sum_{i=1}^{n} \sum_{j \in N_i} 
\mathbb{I}(y_j = y_i)
\]
In the following experiments, $k$ is set to 10 to test for a larger set of neighbors compared to trustworthiness. 
Third, the Silhouette score is used. For each point $i$, the Silhouette value $s_i$ compares the average distance to points with the same label ($d_i$) to the minimum average distance to points with a different label ($n_i$), with the comparison normalized by $\max(d_i, n_i)$:
\[
S = \frac{1}{n}\sum_{i=1}^{n} s_i, \text{ with } s_i = \frac{n_i - d_i}{\max\big(d_i,\, n_i\big)}
\]
While MDS aims to preserve pairwise distances, the available labels are employed for external evaluation to assess how well the embedding preserves distances among points with the same label. 


\subsubsection{Experiments without outliers}

The three metrics of similarity are computed to assess the quality of the Euclidean MDS on the 16 UCI datasets (Huber, Sammon and SMACOF). The optimized Gini MDS is compared with the three aforementioned Euclidean MDS variants. Table \ref{results-MDS} reports the number of times the Gini MDS is top-ranked relative to all Euclidean MDS techniques across the 16 UCI datasets.

\begin{table}[H]
\setlength{\tabcolsep}{2.5pt}
\footnotesize
\caption{\# Rank 1 occurrences on 16 datasets}
\label{results-MDS}
\begin{tabular}{lcccccccc}
\hline\hline
\multirow{2}{*}{\textbf{Components}}
 & \multicolumn{2}{c}{\textbf{Trust} $T(5)$}
 & \multicolumn{2}{c}{\textbf{Neighb.} $NN(10)$}
 & \multicolumn{2}{c}{\textbf{Silhouette} $S$} \\
\cline{2-7}
 & \textbf{MDS} & \textbf{Gini}
 & \textbf{MDS} & \textbf{Gini}
 & \textbf{MDS} & \textbf{Gini} \\
\hline
Component 1 & 13 & 3 & 10 & 6 & 9  & 7 \\
Component 2 & 13 & 3 & 10 & 6 & 10 & 6 \\
Component 3 & 14 & 2 & 8  & 8 & 7  & 9 \\
\hline
\end{tabular}
\end{table}

At the local scale with the trustworthiness statistic ($T(5)$), the Gini MDS ranks first respectively 3, 3, and 2 times out of 16 when using 1, 2, and 3 embedding components. At the same local scale, but considering 10 neighbors ($NN(10)$), the Gini MDS ranks first 6 times with 1 component, 6 times with 2 components, and 8 times with 3 components.

Analyzing the embedding space $\tilde \X$ at a global level thanks to the Silhouette score reveals that the 3 Euclidean MDS together outperform Gini MDS. Over the first component the Silhouette of the Gini MDS is top-ranked on 7 datasets out of 16 (9 times for the other MDSs). The same results are recorded when the embedding space is composed of two components. With three components, the Gini MDS outperforms all other methods with top-ranked positions on 9 datasets out of 16.

\subsubsection{Experiments with outliers}

By definition, an outlier is an extreme value, whose frequency of occurrence is very low, implying heavy-tailed distributions. In practice, the deviation of a given outlier from the mean must be at least 2 times the standard deviation. Accordingly, in order to stress the data, 2\% of the instances (taken at random) are multiplied by 10 times the standard deviation of $\X$. Since the Gini-pseudo distance can take benefit from the couples ``rank-values'' robust to outliers, we can expect the Gini MDS to achieve a better ranking under this stress setting. The contamination process is repeated 100 times (to limit computation time), and the similarity metrics are averaged over these 100 iterations.\footnote{Around 7 hours on an RTX 8000 for Gini MDS and the 3 Euclidean ones.}

\begin{table}[H]
\setlength{\tabcolsep}{1.5pt}
\footnotesize
\caption{\# Rank 1 occurrences on 16 datasets: 2\% contamination}
\label{results-MDS-noise}
\begin{tabular}{lcccccccc}
\hline\hline
\multirow{2}{*}{\textbf{Components}}
 & \multicolumn{2}{c}{\textbf{Trust} $T(5)$ \ \ }
 & \multicolumn{2}{c}{\textbf{Neighb.} $NN(10)$}
 & \multicolumn{2}{c}{\textbf{Silhouette} $S$} \\
\cline{2-7}
 & \textbf{MDS} & \textbf{Gini}
 & \textbf{MDS} & \textbf{Gini}
 & \textbf{MDS} & \textbf{Gini} \\
\hline
Component 1 & 9  & 7 & 13 & 3 & 9 & 7 \\
Component 2 & 11 & 5 & 9  & 7 & 8 & 8 \\
Component 3 & 12 & 4 & 9  & 7 & 9 & 7 \\
\hline
\end{tabular}
\end{table}

The most important metric is $T(5)$ since it relies on a real comparison of the data $\X$ and the embeddings. As shown in Table \ref{results-MDS-noise}, the Gini MDS becomes more competitive. The trustworthiness metric now ranks the Gini MDS first for 7 datasets out of 16 over the first component. The neighbors metric indicates that the Gini MDS achieves a top ranking for 7 datasets over 2 and 3 components, with a decline over the first component. The Silhouette score ranking remains stable compared to the case without outlier.

Another experiment has been done with 5\% contamination. Again, $T(5)$ and $NN(10)$ are slightly improved over the first and the second component respectively, whereas it is not the case for the Silhouette score. This suggests that the Gini MDS better preserves the local structure of the data. 


\subsubsection{Computation time}

The \texttt{sklearn} library allows MDS to be run using precomputed distances. While this is convenient for testing the robustness of different distance metrics, the library is not optimized for large datasets. By contrast, our tensor-based pairwise computation of Gini and Euclidean distances implemented in \texttt{PyTorch} (with GPU) provides faster results. For illustration, we consider the Banknote dataset, the largest of the 16 UCI datasets, with 1,372 instances and 4 features. We iterate over 30 values of the Gini hyperparameter $\nu$ ranging from 1.1 to 5 (increment of 0.1). As shown in Table \ref{table:time}, our implementation is significantly faster, requiring about 9 seconds compared to 46 seconds with \texttt{sklearn}\footnote{9.07 seconds on GPU RTX 8000, see  \href{https://github.com/bangtan66708/MDS_Gini_pseudo_metric/tree/main}{GitHub}.} (over 3 components).

\begin{table}[H]
\centering
\footnotesize
\caption{Execution Times (Banknote: $n=1,372$)}\label{table:time}
\begin{tabular}{c c}
\hline\hline
\textbf{MDS Types} & \textbf{Time} \\
\hline
\texttt{Pytorch} Gini MDS ($\nu^*=3.70$) & 9.071s \\
\texttt{Pytorch} Euclidean MDS      & 0.320s \\
\texttt{sklearn} Euclidean MDS   & 46.013s \\
\hline
\end{tabular}
\end{table}

\subsection{MNIST data}\label{MNIST}

The experiment on MNIST data \cite{Lecun1998} (images of size $28 \times 28$ pixels \textit{i.e.} 784 dimensions) is well suited, because the Gini pseudo-distance is based on ranks across dimensions rather than ranks across observations, in contrast to standard Gini metrics \cite{Yitzhaki2013,Sang2024,Mussard2025}. Since in a high-dimensional setting the Gini pseudo-distance exhibits more rank gaps, this could improve its ability to attenuate the influence of noisy points. We show that reconstructing the original images from the embeddings obtained with Gini MDS yields better classifications (with 1 component and 9 classifiers).

\subsubsection{Classification without noise}

In tasks of MNIST classification, it is usually hard for classifiers to distinguish between 5 and 6 (Figure \ref{fig:mnist-sample}).   

\begin{figure}[h!]
    \centering
    \includegraphics[width=5cm]{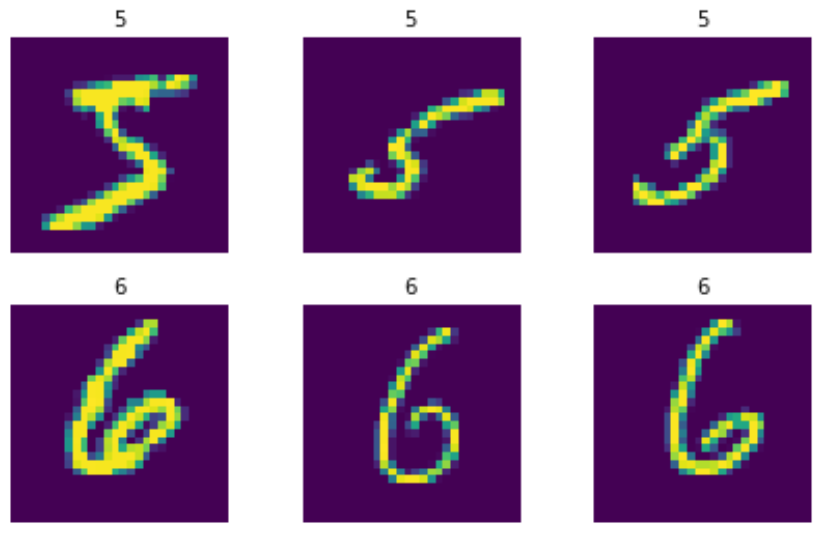}
    \caption{MNIST digits}
    \label{fig:mnist-sample}
\end{figure}

For this task of classification, we proceed in three steps. 

$\bulle$ \underline{Step 1:} 2,500 training images of each label (5 and 6) are taken at random to produce MDS embeddings (optimized Gini and standard Euclidean). 

$\bulle$ \underline{Step 2:} The Nyström technique is applied \cite{Williams2001nystrom} to reconstruct images. Images are first embedded onto one component, then a kernel (Gram) matrix is used to learn an inverse mapping from the embedded space back to the original input space. This allows the 2,500 images of the test data to be reconstructed from 1 component only.

$\bulle$ \underline{Step 3:}
To avoid the dependence to a particular classifier, 9 classifiers are employed to predict whether reconstructed images from 1 component are of label 5 or 6 on a test dataset of 250 images for each label and each fold (K-folds = 10): the multi-layer perceptron, the logistic regression, the k-nearest neighbors (with 5 neighbors), the linear and quadratic discriminant analyses, the naive Bayes classifier, the ridge classifier, and the linear SVM (for faster computation). A hard voting scheme is applied: for each reconstructed image in the test set, each classifier predicts a class, and the final assigned label is determined by the majority voting.

$\bulle$ \underline{Step 4:} Steps 1 to 3 are repeated for a 10-fold validation.

The results with one component are depicted in  Table \ref{tab:mds_results}. As expected Gini and Euclidean MDS are very close, with slightly better performance for the Euclidean: a F-measure around 0.9 for the Euclidean MDS against 0.89 for Gini.

\begin{table}[H]
\centering
\footnotesize
\caption{Performance of the MDS (1 Component)}
\begin{tabular}{ccccc}
\hline\hline
\textbf{MDS Type} & \textbf{Class} & \textbf{Precision} & \textbf{Recall} & \textbf{F1-Score} \\
\hline
\multirow{2}{*}{Euclidean MDS} & 5 & 0.8760 & 0.9268 & 0.9007 \\
                               & 6 & 0.9223 & 0.8688 & 0.8947 \\

\multirow{2}{*}{Gini MDS}      & 5 & 0.8680 & 0.9208 & 0.8936 \\
                               & 6 & 0.9157 & 0.8600 & 0.8870 \\
\hline
\end{tabular}
\label{tab:mds_results}
\end{table}

With 2 components, the Euclidean MDS reports a global F-measure (macro-average) of 0.9180 against 0.9074 for the Gini. With 3 components, the global F-measures are almost equivalent: 0.9458 for the standard MDS, 0.9410 for Gini. 

\subsubsection{Classification with noise}

Compared to Section \ref{MNIST} focused on outliers, we now propose an experiment of classification with noise. The difference is that all pixels are contaminated by an independent Gaussian noise drawn from a standard normal distribution. However, the level of contamination is very low, since to each pixel value is added the Gaussian noise scaled by one standard deviation of $\X$ (Figure \ref{fig:mnist-sample-2}). 

\begin{figure}[h!]
    \centering
    \includegraphics[width=6cm]{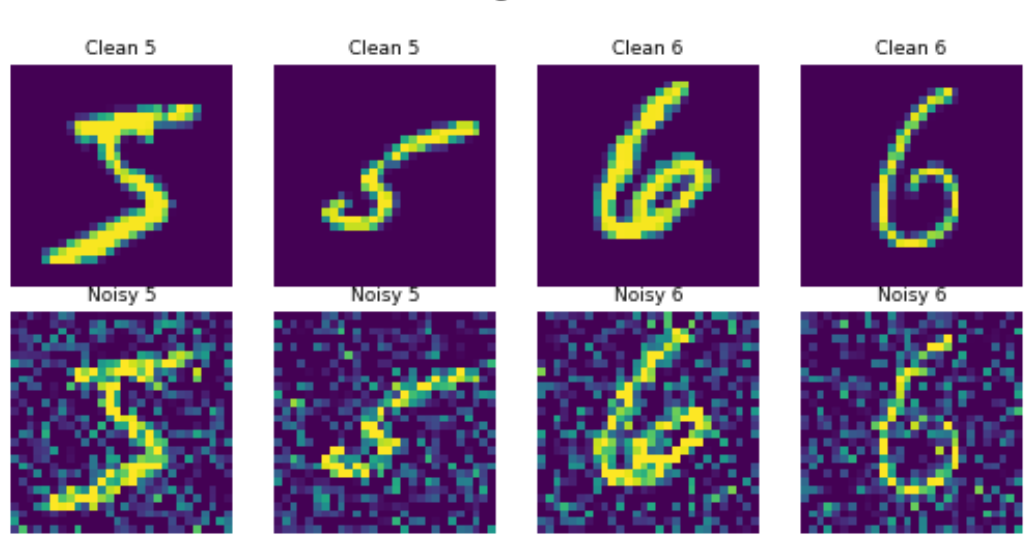}
    \caption{MNIST digits with noise}
    \label{fig:mnist-sample-2}
\end{figure}

We proceed as before with 4 steps on noisy images. The voting classifier achieves lower performance. Over 1 component, the Euclidean MDS underperforms compared to Gini as shown in Table \ref{result:final-euc}.

\begin{table}[H]
\centering
\footnotesize
\caption{MDS performance on noisy MNIST (1 Component)}
\begin{tabular}{ccccc}
\hline\hline
\textbf{MDS Type} & \textbf{Class} & \textbf{Precision} & \textbf{Recall} & \textbf{F1-Score} \\
\hline
\multirow{2}{*}{Euclidean MDS} & 5 & 0.7591 & 0.8128 & 0.7850 \\
                               & 6 & 0.7985 & 0.7420 & 0.7692 \\

\multirow{2}{*}{Gini MDS}      & 5 & 0.7758 & 0.8276 & \textcolor{blue}{\textbf{0.8009}} \\
                               & 6 & 0.8153 & 0.7608 & \textcolor{blue}{\textbf{0.7871}} \\
\hline
\end{tabular}
\label{result:final-euc}
\end{table}

It is noteworthy that the metric used to optimize the MDS is the standard KS metric. Indeed, the robust Huber technique provides worst results \textit{i.e.}, a global F-measure of 0.4048 in the Euclidean case. Over 2 components the global F-measures are almost identical: 0.8660 for the Euclidean MDS and 0.8659 for Gini MDS. The same result holds true for 3 components: 0.8768 for the Euclidean MDS, 0.8784 for Gini MDS. 

This shows that the Gini MDS has more compression power since it captures more information on the first component compared to the Euclidean MDS when data are contaminated.

\subsection{Simulations: Gini MDS vs. nonlinear MDS}\label{sec:simul}

The Gini MDS, based on a nonlinear distance, must be compared to other dimensionality reduction methods being nonlinear: t-SNE, Isomap, and UMAP. 

t-SNE \cite{Maaten2008tsne} is a probabilistic method that preserves local neighborhood structures by minimizing the divergence between pairwise similarities in high- and low-dimensional spaces. It is relevant for visualization, but it may fail to preserve the global geometry. Isomap \cite{tenenbaum2000isomap} extends classical MDS by replacing Euclidean distances with geodesic distances computed on a neighborhood graph. It preserves the global manifold structure under the assumption of isometry, however it may be sensitive to noise. Finally, UMAP \cite{mcinnes2018umap} relies on a topological framework based on fuzzy sets to preserve both local and global structures. It is computationally efficient and scalable, but depends on several hyperparameters.

We simulate 6 heavy-tailed distributions ($d = 6$) with $n = 500$ instances:

$\bulle$ Gaussian $\mathcal{N}(0,1)$ with 5\% outliers drawn from $\mathcal{N}(0,10^2)$,

$\bulle$ Cauchy distribution $\mathcal{C}(0,1)$,

$\bulle$ Weibull distribution with shape parameter $k = 0.5$,

$\bulle$ Pareto distribution with shape parameter $\alpha = 2$,

$\bulle$ Student-$t$ distribution with $\nu = 2$ degrees of freedom,

$\bulle$ $\ell$og-$\mathcal{N}(0,1.5^2)$.


\begin{algorithm}[h!]
\caption{Experiment for heavy-tailed distributions}
\begin{algorithmic}[1]

\FOR{$k = 1$ to $500$}

    \STATE Generate dataset $\X^{(k)} \in \mathbb{R}^{500 \times 6}$ with 6 independently sampled heavy-tailed or contaminated features

    \STATE Initialize $\nu = 2$
    \FOR{$t = 1$ to $T=3$}
        \STATE \textbf{Step 1:} Gini MDS:  
        $\min \mathcal{L}_{\text{Sammon}} = \boldsymbol{\tilde X}^{(k)}$ $(\nu=2)$
        \STATE \textbf{Step 2:} Update 
        $
        \nu^{*} = \arg\min_{\nu}$ KS($\nu$)
        \STATE \textbf{Step 3:} $\nu \leftarrow \nu^{*}$
    \ENDFOR
    \STATE \hspace{0.5cm} $\boldsymbol{\tilde X}^{(k)}[:, 0:2]\leftarrow$ t-SNE with optimized perplexity
    \STATE \hspace{0.5cm} $\boldsymbol{\tilde X}^{(k)}[:, 0:2]\leftarrow$ Isomap with optimized neighbors
    \STATE \hspace{0.5cm} $\boldsymbol{\tilde X}^{(k)}[:, 0:2]\leftarrow$ UMAP with optimized neighbors
    \STATE Evaluation metrics: Trust, NN, Silhouette
    \STATE Correlation metrics: Pearson and Spearman (rank)
\ENDFOR
\STATE Compute final results by averaging metrics over 500
\end{algorithmic}
\end{algorithm}

The data are centered to zero median and unit variance. In this case, the Gini metric space $(\mathbb G^d,\mathcal D_G)$ (with $\bar{x}$ being the median) prevents the generalized Gini distance to disproportionately take benefit from scale effects. A binary label $y$ is defined based on the median of the first feature to enable evaluation using classification-oriented metrics.

\begin{figure}[h!]
    \centering
\includegraphics[width=1.05\linewidth]{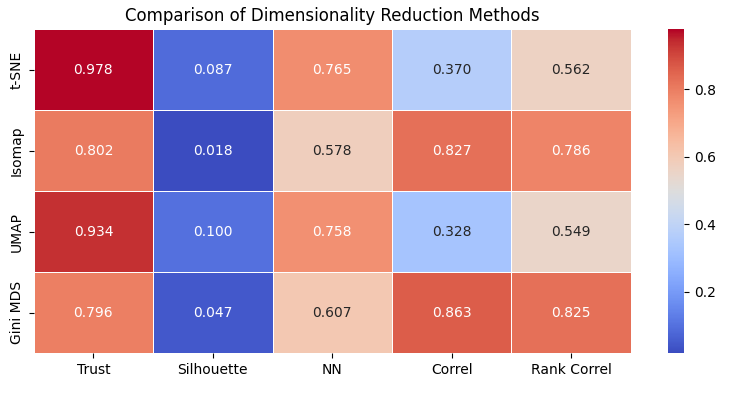}
    \caption{Simulations: nonlinear MDSs}
    \label{fig:simul}
\end{figure}

In total, 124,750 Euclidean distances are computed from the original data $\boldsymbol{X}^{(k)} \in \mathbb{R}^{500 \times 6}$, and likewise from the corresponding embeddings over two components $\boldsymbol{\tilde{X}}^{(k)} \in \mathbb{R}^{500 \times 2}$ (for the 4 methods). Pearson and Spearman (rank) correlations are computed between these two distance vectors, and then averaged over 500 replications. Although the optimized Gini MDS (last row of Figure~\ref{fig:simul}) ranks respectively 4th, 3rd, and 3rd in terms of Trustworthiness, Silhouette, and Nearest Neighbor metrics, these measures primarily capture local neighborhood preservation and clustering effects. In contrast, the Gini MDS achieves the highest correlation scores, with values of 0.863 and 0.825 (Pearson and Spearman, respectively), demonstrating its ability to preserve the global geometric structure of the data \textit{i.e.} the pairwise distance relationships.

\begin{table}[ht]
\centering
\caption{Execution and convergence time}
\label{tab:timing_methods}
\begin{tabular}{lcccc}
\hline\hline
\textbf{Nonlinear MDS} & \textbf{t-SNE} & \textbf{Isomap} & \textbf{UMAP} & \textbf{GiniMDS} \\
\hline
Time (average over 500 runs) & 3.34s & 0.15s & 1.1s & 2.0s \\
Convergence time (best run) & 1.10s & 0.15s & 0.35s & 0.80s \\
GPU / CPU & CPU & CPU & CPU & CPU \\
\bottomrule
\end{tabular}
\end{table}

The convergence time corresponds to the runtime of a single embedding with optimal hyperparameters, whereas the total time includes the full hyperparameter tuning procedure. All methods are executed on CPU for a fair comparison.

\section{Conclusion}\label{conslusion}

The optimized Gini multidimensional scaling is based on a generalized Gini (pseudo-)distance. It combines value and rank vectors with a tunable hyperparameter $\nu$, providing a robust framework for dimensionality reduction.

The empirical results, on 16 UCI datasets, show that Gini MDS performs better than the Euclidean MDS and its optimized variants (Huber, Sammon, SMACOF), especially under outlying contamination. While Euclidean methods perform well in clean settings for global structure, Gini MDS is able to preserve local structure, as shown by trustworthiness and neighborhood metrics. Applied to image data in a higher-dimensional setting, the Gini MDS combined with the Nyström method allows a robust reconstruction and classification of MNIST digits. In noisy conditions, it yields improved classification performance over Euclidean MDS, thanks to the rank difference captured by the Gini pseudo-distance. Finally, simulations on heavy-tailed distributions reveal that the Gini MDS outperforms other nonlinear methods maintaining pairwise distances in the embedding space. 



\bibliographystyle{IEEEtran}
\bibliography{IEEEexample}

\section{Appendix A: Proof}\label{appendix-A}

\noindent \textbf{Proof of Proposition 1:}
\noindent \noindent 
\newline $(\imath)$ It comes:
\begin{align}
     \|\lambda x \|_G &=  \mathcal{D}_G(\lambda x, 0_d) = \sum_{j=1}^d  \lambda x_j \left( R(\lambda x_{j}) - \frac{d+1}{2} \right) \notag
\end{align}
Since $R(\cdot)$ is an homogeneous function of degree zero if $\lambda \ge 0$, \textit{i.e.} $R(\lambda x) = R(x)$, then: 
\begin{align*}
    \|\lambda x \|_G &= \sum_{j=1}^d  \lambda x_j \left( R(x_{j}) - \frac{d+1}{2} \right) \\
    &= \lambda\left(\sum_{j=1}^d x_j \left( R(x_{j}) - \frac{d+1}{2} \right) \right) = \lambda \| x \|_G \\
\end{align*}
Since $R(-x)=d+1-R(x)$, if $\lambda < 0$ then: 
\begin{align}
     \|\lambda x \|_G &= \sum_{j=1}^d  \lambda x_j \left( R(\lambda x_{j}) - \frac{d+1}{2} \right) \\ 
     &= \sum_{j=1}^d  \lambda x_j \left( d+1 - R(x_{j}) - \frac{d+1}{2} \right) \notag \\
     &= -\lambda\left(\sum_{j=1}^d x_j \left( R(x_{j}) - \frac{d+1}{2} \right) \right) = | \lambda | \| x \|_G  \notag 
\end{align}
\newline $(\imath\imath)$ Let us now prove the triangle inequality: 
$ \lVert x + y\rVert_G \leq  
\lVert x \rVert_G +  
\lVert y \rVert_G$, $\forall  x,y \in \mathbb E^d$. Let $x,y \in \mathbb E^d$ such that $z = x + y$, we have:
\begin{align}
\lVert z \rVert_G &= \sum_{j=1}^d (x_j+y_j)\left[R(x_j+y_j)-\frac{d+1}{2}\right] \notag \\
&= \lVert x \rVert_G + \lVert y \rVert_G + \sum_{j=1}^d(R(z_j)-R(x_j))x_j \notag \\
&\quad + \sum_{j=1}^d(R(z_j)-R(y_j))y_j \notag
\end{align}
In order to get $ \lVert x + y\rVert_G \leq  
\lVert x \rVert_G +  \lVert y \rVert_G$, sufficient conditions are $\sum_{j=1}^d(R(z_j)-R(x_j))x_j \leq 0$  and $\sum_{j=1}^d(R(z_j)-R(y_j))y_j \leq 0$. To show $\sum_{j=1}^d(R(z_j)-R(x_j))x_j \leq 0$, one has to show that $\sum_{j=1}^d R(z_j) x_j \leq \sum_{j=1}^d R(x_j) x_j$. Let $\tilde x$ be such that $\tilde x_1 \leq \cdots \leq \tilde x_d$. From \cite{HLP1952}, for all permutations $R^\pi(x)$ of the elements of vector $R(x)$, $\sum_{j=1}^d R^\pi(x_j) \tilde x_j \leq \sum_{j=1}^d R(\tilde x_j) \tilde x_j = \sum_{j=1}^d R(x_j) x_j$. Therefore, $ \sum_{j=1}^d R(\tilde x_j)\tilde x_j$ is the maximum possible summation according to all possible permutations, so that $\sum_{j=1}^d R(z_j)x_j \leq \sum_{j=1}^d R(x_j)x_j$. The same reasoning applies to get $\sum_{j=1}^d R(z_j)y_j \leq \sum_{j=1}^d R(y_j)y_j$, and this ends the proof. 

\medskip

\noindent \textbf{Proof of Proposition 2 :}

\noindent $(\imath)$ $\mathcal{D}_G(x,y) = 0$ if $x=y$ and $(\imath\imath)$ $\mathcal{D}_G(x,y) = 0$ if $x=c\mathds{1}_d = \lambda y$, $\forall c \neq \lambda \in \R \setminus \{0\}$ are left to the reader. $(\imath\imath\imath)$ Let us prove symmetry by noting that $R(x_j-y_j) = d+1 - R(y_j - x_j)$:
\begin{align}
    \mathcal{D}_G(x,y) &= \sum_{j=1}^d (x_j - y_j)\Big(R(x_{j} - y_{j}) - \frac{d+1}{2}\Big) \notag \\
    &= \sum_{j=1}^d (y_j - x_j)(- R(x_{j} - y_{j}) + \frac{d+1}{2}) \notag \\
    &= \sum_{j=1}^d (y_j - x_j)\Big(R(y_{j} - x_{j}) - d - 1  +\frac{d+1}{2}\Big) 
    \notag \\
    &= \sum_{j=1}^d (y_j - x_j)\Big(R(y_{j} - x_{j}) -\frac{d+1}{2}\Big) 
    = \mathcal{D}_G(y,x) \notag
\end{align}
$(\imath\imath\imath)$ From the triangle inequality of $\lVert x \rVert_G $, 
let us prove that for all $x,y,z \in \mathbb E^d, \mathcal{D}_G(x,z) \leq \mathcal{D}_G(x,y) + \mathcal{D}_G(y,z)$.
From the triangle inequality of the Gini seminorm, we can write for all $u,v\in \mathbb E^d$:
$ \lVert u + v \rVert_G \leq  
\lVert u \rVert_G +  
\lVert v \rVert_G$. 
Setting $u = y-x$ and $v = z-y$ the inequality rewrites:
\begin{align}
     \lVert y-x + z-y \rVert_G &\leq 
    \lVert y-x \rVert_G +  
    \lVert z-y \rVert_G \notag \\
     \lVert z-x \rVert_G &\leq  
    \lVert y-x \rVert_G +  
    \lVert z-y \rVert_G \notag
\end{align}
By definition of the Gini seminorm and by symmetry of the distance, $\lVert z-x \rVert_G = \mathcal{D}_G(z,x) = \mathcal{D}_G(x,z)$, $\lVert y-x \rVert_G = \mathcal{D}_G(y,x)= \mathcal{D}_G(x,y)$ and $\lVert z-y \rVert_G = \mathcal{D}_G(z,y)= \mathcal{D}_G(y,z)$. Then, $\mathcal{D}_G(x,z) \leq \mathcal{D}_G(x,y) + \mathcal{D}_G(y,z)$, and this ends the proof.

\noindent $(\imath v)$ Finally, non-negativity can be shown to be true. From the triangle inequality, it has been proven that for all $x,y,z \in \mathbb E^d, \mathcal{D}_G(x,y) \leq \mathcal{D}_G(x,z)+\mathcal{D}_G(z,y)$. 
Setting $x=y$, the triangle inequality becomes $\mathcal{D}_G(x,x) \leq 2 \mathcal{D}_G(x,z)$. From the Null property, $\mathcal{D}_G(x,x)=0$. The triangle inequality rewrites $0 \leq \mathcal{D}_G(x,z)$ for all $x,z \in \mathbb E^d$, and this ends the proof. 

\noindent $(v)$ The Gini distance is actually a pseudo-distance since $\mathcal{D}_G(x,y) = 0$ is possible when $x \neq y$. This brings out an `ìf'' condition for $x = y$ and not an equivalence that arises only for egalitarian distributions. Given the pseudo-metric space $(\mathbb E^d, \mathcal{D}_G)$, we define the equivalence relation $x \sim y \iff \mathcal{D}_G(x,y) = 0$, that is, $x \sim y$ covers Null ($\imath$) and Egalitarian distributions
($\imath\imath$). Then, the corresponding quotient space is $\mathbb E^d / \sim$. 
The induced metric $\mathcal{D}_G$ on the quotient space $\mathbb E^d / \sim$ is a distance, so that $(\mathbb E^d / \sim, \mathcal{D}_G)$ is a Gini metric space. 

\medskip

\noindent \textbf{Proof of Proposition 3 :}

\noindent Let us define the norm $ \lVert x \rVert_{G,\nu} := \mathcal{D}_{G,\nu}(x,0)$. Let us prove the triangle inequality 
$ \lVert x + y\rVert_{G,\nu} \leq  
\lVert x \rVert_{G,\nu} +  
\lVert y \rVert_{G,\nu}$, for all $x,y \in \mathbb E^d$. Let $z = x + y$, we have:
\begin{align*}
\lVert z \rVert_{G,\nu}  &= -d\sum_{j=1}^d (x_j+y_j)\left[\overline{F}^{\nu-1}(z_j)-\overline{F}^{\nu-1}(0)\right] \notag \\
&= \lVert x \rVert_{G,\nu} + \lVert y \rVert_{G,\nu} -d \sum_{j=1}^d x_j (\overline{F}^{\nu-1}(z_j)-\overline{F}^{\nu-1}(x_j))\\& -d \sum_{j=1}^d y_j(\overline{F}^{\nu-1}(z_j)-\overline{F}^{\nu-1}(y_j)) 
\end{align*}

In the same way as in Proposition 1, using the inequality of Hardy, Littlewood and Pólya (1952) yields \\
$-d \sum_{j=1}^d x_j (\overline{F}^{\nu-1}(z_j)-\overline{F}^{\nu-1}(x_j)) \leq 0$ and $-d \sum_{j=1}^d y_j(\overline{F}^{\nu-1}(z_j)-\overline{F}^{\nu-1}(y_j))\leq 0$, therefore, $ \lVert z\rVert_{G,\nu} = \lVert x+y \rVert_{G,\nu} \leq  
\lVert x \rVert_{G,\nu} +  
\lVert y \rVert_{G,\nu}$.

Now, we define a new norm associated to the pseudo-distance $\overline{\mathcal{D}}_{G,\nu}$. Let the symmetric norm associated to the distance $\overline{\mathcal{D}}_{G,\nu}$ be defined as $\lVert x - y\rVert_{G,\nu}^s := \mathcal{\overline{D}}_{G,\nu}(x,y) = \frac{1}{2}\lVert x - y\rVert_{G,\nu} + \frac{1}{2}\lVert y - x\rVert_{G,\nu} = \frac{1}{2}\mathcal{D}_{G,\nu}(x,y) + \frac{1}{2}\mathcal{D}_{G,\nu}(y,x)$, for all $x,y \in \mathbb E^d$. Let us prove the triangle inequality
$ \lVert x + y\rVert_{G,\nu}^s \leq  
\lVert x \rVert_{G,\nu}^s +  
\lVert y \rVert_{G,\nu}^s$ for all $x,y \in \mathbb E^d$. Since $\lVert \cdot \rVert_{G,\nu}$ respects the triangle inequality then:
\begin{align}
\lVert x + y\rVert_{G,\nu}^s &= \frac{1}{2}\lVert x + y\rVert_{G,\nu} + \frac{1}{2}\lVert -x - y\rVert_{G,\nu} \notag \\
&\leq \frac{1}{2}\Big[\lVert x \rVert_{G,\nu} + \lVert y\rVert_{G,\nu}\Big] + \frac{1}{2}\Big[\lVert -x \rVert_{G,\nu} + \lVert -y\rVert_{G,\nu}\Big] \notag \\
&= \frac{1}{2}\Big[\lVert x \rVert_{G,\nu} + \lVert -x\rVert_{G,\nu}\Big] + \frac{1}{2}\Big[\lVert y \rVert_{G,\nu} + \lVert -y\rVert_{G,\nu}\Big] \notag \\ 
&= \lVert x \rVert_{G,\nu}^s + \lVert  y\rVert_{G,\nu}^s \label{norm-2}
\end{align}
From the triangle inequality of the symmetric norm Eq.\eqref{norm-2}, 
let us prove that for all $x,y,z \in \mathbb E^d, \mathcal{\overline{D}}_{G,\nu}(x,z) \leq \mathcal{\overline{D}}_{G,\nu}(x,y) + \mathcal{\overline{D}}_{G,\nu}(y,z)$.
From the triangle inequality, for all $u,v\in \mathbb E^d$:
$ \lVert u + v \rVert_{G,\nu}^s \leq  
\lVert u \rVert_{G,\nu}^s +  
\lVert v \rVert_{G,\nu}^s$. 
Setting $u = y-x$ and $v = z-y$ the inequality rewrites:
\begin{align}
     \lVert y-x + z-y \rVert_{G,\nu}^s &\leq 
    \lVert y-x \rVert_{G,\nu}^s +  
    \lVert z-y \rVert_{G,\nu}^s \notag \\
     \lVert z-x \rVert_{G,\nu}^s &\leq  
    \lVert y-x \rVert_{G,\nu}^s +  
    \lVert z-y \rVert_{G,\nu}^s \notag
\end{align}
Since $\lVert z-x \rVert_{G,\nu}^s = \mathcal{\overline{D}}_{G,\nu}(z,x) = \mathcal{\overline{D}}_{G,\nu}(x,z)$, $\lVert y-x \rVert_{G,\nu}^s = \mathcal{\overline{D}}_{G,\nu}(y,x)= \mathcal{\overline{D}}_{G,\nu}(x,y)$ and $\lVert z-y \rVert_{G,\nu}^s = \mathcal{\overline{D}}_{G,\nu}(z,y)= \mathcal{\overline{D}}_{G,\nu}(y,z)$. Then, $\mathcal{\overline{D}}_{G,\nu}(x,z) \leq \mathcal{\overline{D}}_{G,\nu}(x,y) + \mathcal{\overline{D}}_{G,\nu}(y,z)$.

From the triangle inequality of the distance $\mathcal{\overline{D}}_{G,\nu}$, we have: 
$\mathcal{\overline{D}}_{G,\nu}(x,y) \leq \mathcal{\overline{D}}_{G,\nu}(x,z)+\mathcal{\overline{D}}_{G,\nu}(z,y)$. Setting $x=y$, the triangle inequality becomes $\mathcal{\overline{D}}_{G,\nu}(x,x) \leq 2 \mathcal{\overline{D}}_{G,\nu}(x,z)$. Note that $\mathcal{\overline{D}}_{G,\nu}$ satisfies the Null property since $\mathcal{\overline{D}}_{G,\nu}(x,x)=\frac{1}{2}\mathcal{D}_{G,\nu}(x,x)+\frac{1}{2}\mathcal{D}_{G,\nu}(x,x) = 0$. Therefore we have $0 \leq \mathcal{\overline{D}}_{G,\nu}(x,z)$ for all $x,z \in \mathbb E^d$.

\end{document}